\documentclass[letterpaper]{article} 
\usepackage{aaai2026}  
\usepackage{times}  
\usepackage{helvet}  
\usepackage{courier}  
\usepackage[hyphens]{url}  
\usepackage{graphicx} 
\urlstyle{rm} 
\usepackage{natbib}  
\usepackage{caption} 
\frenchspacing  
\setlength{\pdfpagewidth}{8.5in}  
\setlength{\pdfpageheight}{11in}  
%
\usepackage{algorithm}
\usepackage{algorithmic}
\usepackage{amsmath}
\usepackage{amssymb}
\usepackage{amsthm}
\usepackage{tikz}
\usetikzlibrary{positioning}
\usepackage{subcaption}
\newcommand{\revise}[1]{\textcolor{black}{#1}}

\newtheorem{theorem}{Theorem}

%
\usepackage{newfloat}
\usepackage{listings}
\DeclareCaptionStyle{ruled}{labelfont=normalfont,labelsep=colon,strut=off} 
\lstset{%
	basicstyle={\footnotesize\ttfamily},
	numbers=left,numberstyle=\footnotesize,xleftmargin=2em,
	aboveskip=0pt,belowskip=0pt,%
	showstringspaces=false,tabsize=2,breaklines=true}
\floatstyle{ruled}
\newfloat{listing}{tb}{lst}{}
\floatname{listing}{Listing}
%
\pdfinfo{
/TemplateVersion (2026.1)
}

\setcounter{secnumdepth}{0} 

%


\title{Inferring Causal Graph Temporal Logic Formulas to Expedite Reinforcement Learning in Temporally Extended Tasks}
\author{
        Hadi Partovi Aria\textsuperscript{\rm 1},
        Zhe Xu\textsuperscript{\rm 1}\thanks{Corresponding author}
}
\affiliations{
    \textsuperscript{\rm 1}School for Engineering of Matter, Transport and Energy, Arizona State University, USA\\
    \{hpartovi, xzhe1\}@asu.edu
}

\usepackage{bibentry}

\begin{document}

\maketitle

\begin{abstract}
Decision-making tasks often unfold on graphs with spatial-temporal dynamics. Black-box reinforcement learning often overlooks how local changes spread through network structure, limiting sample efficiency and interpretability. We present GTL-CIRL, a closed-loop framework that simultaneously learns policies and mines Causal Graph Temporal Logic (Causal GTL) specifications. The method shapes rewards with robustness, collects counterexamples when effects fail, and uses Gaussian Process (GP) driven Bayesian optimization to refine parameterized cause templates. The GP models capture spatial and temporal correlations in the system dynamics, enabling efficient exploration of complex parameter spaces. Case studies in gene and power networks show faster learning and clearer, verifiable behavior compared to standard RL baselines.
\end{abstract}


\section{Introduction}

Reinforcement learning (RL) in networks struggles with delayed propagation. Standard MDPs hide these dependencies. We propose Causal Graph Temporal Logic to encode this structure for efficient learning.

\noindent \textbf{Related Work:} Prior work combines causal reasoning or temporal logic with RL \cite{hadi_icccr,bareinboim2020towards,aksaray2016qlearning}. While recent efforts utilize Temporal Logic-based Causal Diagrams \cite{pmlr-v236-corazza24a} or GTL for classification and synthesis \cite{xu2019graphtemporallogicinference,cubuktepe2020policysynthesisfactoredmdps}, few elevate graph structure to a first-class representation or co-optimize causal formulas with policies. We close the loop between exploration and causal synthesis in dynamic graphs.

\noindent \textbf{Contributions:}
(1) A coupled learning framework (GTL-CIRL) that simultaneously discovers Causal GTL formulas and trains policies, using counterexample traces to refine causal structure over both graph topology and time.
(2) Theoretical guarantees for both Q-learning convergence with GTL robustness rewards and Bayesian optimization of cause templates with bounded regret.
(3) A counterexample generation method that strategically explores the boundary conditions of causal relationships to improve formula precision.
\noindent \textbf{Motivational Example:}
Consider designing a gene therapy for a complex disease. The disease might be caused by a specific combination of mutated genes within a biological unit (BU), or cell. A naive approach might be to simply correct one of these genes. However, the success of such an intervention often depends on both \textit{spatial} and \textit{temporal} context. Spatially, the BU's neighbors might need to be in a supportive state, for instance, expressing certain proteins that enable the gene edit to succeed. Temporally, a sequence of edits might be required in a specific order and within precise time windows for the therapy to be effective.
Figure~\ref{fig:motivation_gtl} illustrates this complexity. A central BU $v$ has a disease-causing pattern of genes. For an intervention to work, it requires spatial support from at least one neighbor with a specific gene active. Furthermore, a timed sequence of interventions is needed: first modify $G_1$, then $G_2$, then $G_4$, each within a distinct time interval. Only if this precise spatio-temporal sequence of events occurs does the desired effect, disease remission, manifest later.
Causal GTL provides a formal language to express such complex rules, and an RL agent equipped with GTL can learn these intricate cause-and-effect relationships.

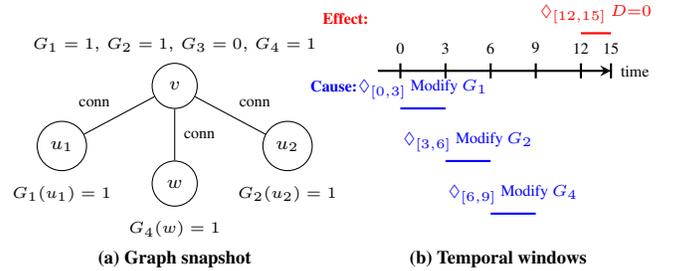
\begin{figure}[h]
\centering
\begin{tikzpicture}[>=stealth, node distance=1.2cm, every node/.style={font=\scriptsize}]
\node at (0,-1) {\textbf{(a) Graph snapshot}};
\node[circle,draw,minimum size=6mm] (v) at (0,1.3) {$v$};
\node[above=0.1cm of v, align=center, inner sep=1pt] (vgenes) {\tiny $G_1=1,\,G_2=1,\,G_3=0,\,G_4=1$};
\node[circle,draw,minimum size=6mm] (u1) at (-1.5,0.5) {$u_1$};
\node[circle,draw,minimum size=6mm] (u2) at (1.5,0.5) {$u_2$};
\node[circle,draw,minimum size=6mm] (w)  at (0,0.0) {$w$};
\draw (v)--(u1) node[midway,above left] {\tiny conn};
\draw (v)--(u2) node[midway,above right] {\tiny conn};
\draw (v)--(w)  node[midway,right] {\tiny conn};
\node[below=2pt of u1] {\tiny $G_1(u_1)=1$};
\node[below=2pt of u2] {\tiny $G_2(u_2)=1$};
\node[below=2pt of w]  {\tiny $G_4(w)=1$};

\begin{scope}[xshift=4.3cm]
\node at (0,-1) {\textbf{(b) Temporal windows}};
\draw[->, thick] (-1.6,1.5) -- (1.5,1.5) node[right] {\tiny time};
\foreach \x/\t in {-1.3/0,-0.7/3,-0.1/6,0.5/9,1.1/12,1.5/15} {
    \draw[thick] (\x,1.4) -- (\x,1.6) node[above] {\tiny \t};
}
\draw[thick, blue] (-1.3,1.0) -- (-0.7,1.0) 
    node[midway, above, text=blue] {\tiny $\Diamond_{[0,3]}$\ Modify $G_1$};
\draw[thick, blue] (-0.7,0.3) -- (-0.1,0.3) 
    node[midway, above, text=blue] {\tiny $\Diamond_{[3,6]}$ Modify $G_2$};
\draw[thick, blue] (-0.1,-0.4) -- (0.5,-0.4) 
    node[midway, above, text=blue] {\tiny $\Diamond_{[6,9]}$ Modify $G_4$};
\draw[thick, red] (1.1,2) -- (1.5,2) 
    node[midway, above, text=red] {\tiny $\Diamond_{[12,15]}$ $D{=}0$};

\node[blue, anchor=east] at (-1.7,1.3) {\tiny \textbf{Cause:\hspace{0.5mm} }};
\node[red, anchor=east] at (-1.6,2.2) {\tiny \textbf{Effect:}};
\end{scope}

\end{tikzpicture}
\caption{Graph Temporal Logic for the motivational gene-regulation example. (a) A biological unit $v$ with neighbors $u_1,u_2,w$. (b) Temporal windows encode the ordered edits (cause) and the outcome window (effect).}
\label{fig:motivation_gtl}
\end{figure}

\section{Preliminaries}
\noindent\textbf{Graph Temporal Logic:}
Graph Temporal Logic (GTL) is a formal language designed to express properties about networks (graphs) that evolve over time \cite{xu2019graphtemporallogicinference}. It extends traditional temporal logic by adding constructs that can express spatial relationships between nodes in a graph.
\revise{\noindent\textbf{Atomic Propositions and Robustness:}
An atomic node proposition $\pi$ is a predicate of the form $f(x) \geq c$, where $f$ maps node labels to real values and $c \in \mathbb{R}$ is a threshold. For discrete states (e.g., gene states), we map them to reals (e.g., $1.0$ for active, $0.0$ for inactive) or use signed distance metrics.
The robustness degree $\rho(\tau, \varphi, t)$ quantifies how well trajectory $\tau$ satisfies $\varphi$ at time $t$. For $\pi \equiv f(x) \geq c$, $\rho(\tau, \pi, t) = f(x_t) - c$.
Robustness propagates recursively: $\rho(\tau, \neg \varphi, t) = -\rho(\tau, \varphi, t)$, $\rho(\tau, \varphi_1 \wedge \varphi_2, t) = \min(\rho(\tau, \varphi_1, t), \rho(\tau, \varphi_2, t))$.
For temporal operators, $\rho(\tau, \Diamond_{[a,b]}\varphi, t) = \max_{t' \in [t+a, t+b]} \rho(\tau, \varphi, t')$ and $\rho(\tau, \Box_{[a,b]}\varphi, t) = \min_{t' \in [t+a, t+b]} \rho(\tau, \varphi, t')$.
\revise{We work with the $(F,G)$-fragment of GTL, utilizing only the bounded \textit{eventually} ($\Diamond_{[a,b]}$) and \textit{always} ($\Box_{[a,b]}$) operators. This ensures that the horizon of any formula is finite.}
}

Let $G=(V, E)$ be an undirected graph, where $V$ is a finite set of nodes and $E$ is a finite set of edges. We use $\mathcal{X}$ to denote a set of node labels and $\mathcal{Y}$ to denote a set of edge labels. A \textit{graph-temporal trajectory} on a graph $G$ is a tuple $g=(x, y)$, where $x:V\times\mathbb{T}\rightarrow \mathcal{X}$ assigns a node label for each node $v\in V$ at each time index $k\in\mathbb{T}$, and $y:E\times\mathbb{T}\rightarrow\mathcal{Y}$ assigns an edge label for each edge $e\in E$ at each time index $k\in\mathbb{T}$.
The syntax of a GTL formula $\varphi$ can be defined as:
\begin{center}
$\varphi:=\pi~|~\exists^{N}(\bigcirc_{\rho_{n}}\cdots \bigcirc_{\rho_{1}})\varphi~|~\neg\varphi~|~\varphi\wedge\varphi~|~\Diamond_{[a,b]}\varphi~|~\Box_{[a,b]}\varphi$
\end{center}
where $\pi$ is an atomic node proposition, $\rho_i$ are edge propositions, $\exists^{N}(\bigcirc_{\rho_{n}}\cdots \bigcirc_{\rho_{1}})\varphi$ reads as \textit{there exists at least $N$ nodes under the neighbor operation that satisfy $\varphi$}, $\lnot$ and $\wedge$ represent negation and conjunction, and $\Diamond_{[a,b]}$ and $\Box_{[a,b]}$ are the bounded \textit{eventually} and \textit{always} operators.

\noindent\textbf{Causal GTL:} Causal GTL extends GTL to formalize causal relationships in networks, typically expressed as:
\begin{equation}
\Phi := \text{do}(\phi_c) \rightsquigarrow \phi_e
\end{equation}
where $\phi_c$ represents the cause formula, and $\phi_e$ represents the effect formula. \revise{The operator $\text{do}(\phi_c)$ signifies an intervention where variables in $\phi_c$ are forced to values satisfying $\phi_c$, overriding default dynamics.} These formulas use GTL operators: $\Diamond_{[a,b]} \phi$ (eventually) indicates $\phi$ holds at some point within $[a, b]$; $\Box_{[a,b]} \phi$ (always) means $\phi$ holds throughout $[a, b]$.
To quantify the strength of causal relationships, Causal GTL introduces metrics for sufficiency, necessity, and existence based on empirical data. The sufficiency degree is \cite{hadi}:
\begin{equation}
S(\theta; \mathcal{D}) = \frac{1}{|\mathcal{D}_+|} \sum_{\tau \in \mathcal{D}_+} \rho(\tau, \phi_e, t \mid \text{do}(\phi_c))
\end{equation}
where $\mathcal{D}_+$ is the subset of trajectories in dataset $\mathcal{D}$ where $\rho(\tau, \phi_c, 0) > 0$. The necessity degree is:
\begin{equation}
N(\theta; \mathcal{D}) = -\frac{1}{|\mathcal{D}_-|} \sum_{\tau \in \mathcal{D}_-} \rho(\tau, \phi_e, t \mid \text{do}(\neg\phi_c))
\end{equation}
where $\mathcal{D}_-$ is the subset where $\rho(\tau, \phi_c, 0) < 0$. The existence degree quantifies how frequently the cause formula is observed in the dataset:
\begin{equation}
E(\theta; \mathcal{D}) = \frac{1}{|\mathcal{D}|} \sum_{\tau \in \mathcal{D}} e^{-\left(\rho(\tau, \phi_c(\theta), t) - \rho(\tau^*, \phi_c(\theta), t)\right)}
\end{equation}
where $\tau^*$ is a nominal reference trajectory. Here, $\rho(\tau, \phi, t)$ is the robustness degree, quantifying how strongly $\tau$ satisfies $\phi$ at time $t$.

\section{Q-Learning with GTL Objectives}

Q-learning, a model-free RL algorithm, requires adaptation for GTL objectives through reward functions that align with GTL satisfaction criteria. To handle the history dependence inherent in GTL specifications, the system is modeled as a $\tau$-MDP, where each state $s^\tau \in \Sigma^\tau$ represents a sequence of $\tau$ consecutive states from the original MDP, with $\tau$ determined by the horizon of the inner formula $\phi$.
For a causal GTL specification $\Phi$, the robustness degree quantifies how well a trajectory satisfies the formula. The reward function based on robustness is defined as:
\begin{equation}
R(s^\tau_{t+1}) =
\begin{cases}
e^{\beta \rho(s^\tau_{t+1}, \phi)}, & \text{if } \Phi \text{ is } \Diamond_{[0,T]} \phi \\
-e^{-\beta \rho(s^\tau_{t+1}, \phi)}, & \text{if } \Phi \text{ is } \Box_{[0,T]} \phi
\end{cases}\label{eq:reward}
\end{equation}
where $\beta > 0$ is a scaling parameter. For $\Phi \text{ in the form of } \Diamond_{[0,T]} \phi$, the exponential reward encourages maximizing robustness at some point in the trajectory. For $\Phi \text{ in the form of } \Box_{[0,T]} \phi$, the negative exponential penalizes low robustness.
The Q-learning algorithm is then tailored to this $\tau$-MDP setting, optimizing the action-value function $Q(s^\tau, a)$ with the update rule:
\begin{equation}
\begin{aligned}
Q(s^\tau_t, a) &\leftarrow Q(s^\tau_t, a) + \alpha [ R(s^\tau_{t+1}) \\
&\quad + \gamma \max_{a'} Q(s^\tau_{t+1}, a') - Q(s^\tau_t, a) ]
\end{aligned}
\end{equation}

\noindent where $R(s^\tau_{t+1})$ is the robustness-based reward at the next $\tau$-state.

\section{GTL-CIRL Framework}

This section presents GTL-CIRL, our framework that jointly optimizes reinforcement learning policies and causal GTL formulas. Algorithm~\ref{algo:closed_loop_rl_main} outlines the main process: the agent explores the environment to update its policy via Q-learning while collecting counterexample trajectories. Algorithm~\ref{algo:evaluate_sne} analyzes these trajectories to compute sufficiency (S), necessity (N), and existence (E) scores, quantifying how well a candidate cause formula explains observed effects. \revise{In Algorithm~\ref{algo:evaluate_sne}, $\pi'_c$ denotes an intervention policy that modifies the state or action sequence to enforce (or negate) the cause condition $\phi_c$, allowing the generation of counterfactual trajectory $\tau'$.} These metrics guide Bayesian optimization to refine the causal formula, creating a feedback loop where policy learning and causal discovery mutually enhance each other.

\revise{
The parameters $\theta$ of the causal formula $\phi_c(\theta)$ (e.g., thresholds like $0.90$ in power grids or time bounds $[a,b]$) are continuous variables optimized via the Gaussian Process. The GP explores the parameter space to maximize the objective $J(\phi_c)$, allowing the system to discover precise values that best explain the data.
}

\begin{algorithm}[h!]
    \caption{Evaluate Sufficiency, Necessity, and Existence}
    \small
    \label{algo:evaluate_sne}
    \begin{algorithmic}[1]
    \STATE Initialize empty lists \texttt{sufficiency\_scores}, \texttt{necessity\_scores}, \texttt{existence\_scores}
    \FOR{$i = 1$ to $I$}
        \STATE Get $\tau \in \mathcal{CE}$ \label{line:get_tau}
        \STATE Generate counterfactual $\tau'$ under $do(\pi'_c)$ \label{line:gen_cf}
        \STATE Compute $\rho(\tau', \phi_c, t)$ and $\rho(\tau', \phi_e, t)$ \label{line:compute_sne_start}
        \IF{$\rho(\tau', \phi_c) > \epsilon_{d_1}$}
            \STATE Append $\rho(\tau', \phi_e)$ to \texttt{sufficiency\_scores}
        \ENDIF
        \IF{$\rho(\tau', \phi_c) < -\epsilon_{d_2}$}
            \STATE Append $\rho(\tau', \phi_e)$ to \texttt{necessity\_scores}
        \ENDIF
        \STATE Append $\rho(\tau', \phi_c)$ to \texttt{existence\_scores} \label{line:compute_sne_end}
    \ENDFOR
    \STATE $S \gets \text{Mean}(\texttt{sufficiency\_scores})$
    \STATE $N \gets e^{-(\text{Mean}(\texttt{necessity\_scores}))}$
    \STATE $E \gets e^{-(\text{Mean}(\texttt{existence\_scores}))}$
    \RETURN $(S, N, E)$
    \end{algorithmic}
    \end{algorithm}

The system continuously evaluates trajectory robustness $\rho(\tau, \phi_c, t)$ and $\rho(\tau, \phi_e, t)$ for both cause and effect formulas. When a trajectory violates the effect formula ($\rho(\tau, \phi_e, t) \leq 0$), it is stored as a counterexample in buffer $\mathcal{CE}$ (Algorithm~\ref{algo:closed_loop_rl_main}, line~\ref{line:store_ce}). These counterexamples are crucial for computing the sufficiency, necessity, and existence measures that guide formula refinement.
The formula refinement process optimizes the objective function:
\begin{equation}
\sup J(\phi_c) = -E + \lambda_S S + \lambda_N N
\end{equation}
where $S$, $N$, and $E$ represent sufficiency, necessity, and existence measures respectively. Bayesian optimization guides the search for improved cause formulas by maintaining a probabilistic model of the objective function. The GP model uses a radial basis function kernel:
\begin{equation}
k(x, x') = \exp\left(-\frac{1}{2l^2}||x - x'||^2\right)
\end{equation}
where $l > 0$ is the length scale parameter that determines the smoothness of the function.

\begin{algorithm}[h!]
\caption{GTL-CIRL}
\small
\label{algo:closed_loop_rl_main}
\begin{algorithmic}[1]
\STATE Initialize $Q(s^\tau, a) \gets 0$, policy $\pi$, GP model, $\mathcal{C} \gets \emptyset$
\STATE Set $\phi_c \gets \phi_c^0$
\FOR{$k = 1$ to $K$}
    \STATE Reset $\mathcal{E}$, get $s_0$, initialize $\tau_{\text{cur}} \gets \emptyset$
    \FOR{$t = 0$ to $T-1$}
        \STATE Select $a_t \sim \pi(s^\tau_t)$ ($\epsilon$-greedy)
        \STATE Execute $a_t$, observe $s_{t+1}$, update $\tau_{\text{cur}}$
        \STATE Compute $\rho(\tau_{\text{cur}}, \phi_c, 0)$, $\rho(\tau_{\text{cur}}, \phi_e, 0)$
        \STATE Compute reward
        \STATE Update $Q(s^\tau_t, a_t)$ and $\pi(s^\tau_t)$
        \IF{$\rho(\phi_e, \tau) \leq 0$}
            \STATE Add $\tau_{\text{cur}}$ to $\mathcal{CE}$ \label{line:store_ce}
        \ENDIF
    \ENDFOR
    \STATE Compute $S, N, E$ using Algorithm~\ref{algo:evaluate_sne} \label{line:compute_sne}
    \STATE $\phi_c^{k+1} \gets \arg\max_{\phi_c} (-E + \lambda_S S + \lambda_N N)$
    \STATE Update GP model with $S, N, E$
\ENDFOR
\RETURN $(\phi_c, \pi^*)$
\end{algorithmic}
\end{algorithm}

\subsection{Counterexample Generation}
We synthesize informative counterexamples by perturbing states around effect violations and rolling out trajectories from these perturbed conditions. Candidates that satisfy the cause while breaking the effect are retained, enriching the buffer that drives sufficiency/necessity/existence estimation and subsequent formula refinement. \revise{Here, a \emph{counterexample} is any trajectory $\tau$ for which the effect formula is violated. Such traces lie close to the decision boundary of the causal relation and are therefore the most informative for updating the parameters of $\phi_c$. By focusing the GP-based optimization on these failure cases, we sharpen the distinction between situations where the cause is truly sufficient/necessary for the effect and those where it is not.} \revise{Specifically, we perturb individual state variables $v_i$ in the current state vector $s_t$ (e.g., voltages, loads, or gene expression levels) by a small \emph{perturbation range} $\epsilon>0$. In Algorithm~\ref{algo:counterexample_gen}, this is instantiated as a finite symmetric set of offsets $\delta \in \{-\epsilon,\epsilon\}$ applied to each $v_i$, so the set of perturbations is finite and fully enumerated.} Algorithm~\ref{algo:counterexample_gen} details this process.

\begin{algorithm}[h!]
    \caption{Counterexample Generation}
    \small
    \label{algo:counterexample_gen}
    \begin{algorithmic}[1]
    \REQUIRE Current state $s_t$, action $a_t$, formulas $\phi_c, \phi_e$, perturbation range $\epsilon$
    \ENSURE Set of counterexamples $\mathcal{CE}$
    \STATE Initialize $\mathcal{CE} \gets \emptyset$
    \STATE $\tau_{\text{base}} \gets$ GetCurrentTrajectory()
    \IF{$\rho(\phi_e, \tau_{\text{base}}, t) \leq 0$}
        \FORALL{state variable $v_i$ in $s_t$}
            \FORALL{$\delta \in \{-\epsilon, \epsilon\}$}
                \STATE $s_t' \gets$ PerturbState($s_t$, $v_i$, $\delta$)
                \STATE $\tau' \gets$ SimulateTrajectory($s_t'$, $a_t$)
                \IF{IsValidCounterexample($\tau'$, $\phi_c$, $\phi_e$)}
                    \STATE $\mathcal{CE} \gets \mathcal{CE} \cup \{\tau'\}$
                \ENDIF
            \ENDFOR
        \ENDFOR
    \ENDIF
    \RETURN $\mathcal{CE}$
    \end{algorithmic}
\end{algorithm}

\section{Theoretical Analysis}

This section provides theoretical guarantees for our GTL-CIRL framework.

\begin{theorem}[Convergence of Q-Learning with GTL Robustness Rewards]
For finite $\tau$-MDP with GTL reward (Eq. \ref{eq:reward}) and Robbins-Monro conditions, $Q_k$ converges to $Q^*$ with probability 1.
\end{theorem}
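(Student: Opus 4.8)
The plan is to reduce the statement to the classical convergence theorem for tabular Q-learning and then verify its hypotheses in our setting. First I would observe that the $\tau$-MDP is a genuine finite, stationary, discounted MDP: its state space $\Sigma^\tau$ consists of the length-$\tau$ windows over the finite original state space, so $|\Sigma^\tau| = |\Sigma|^\tau < \infty$; the action set is the original finite one; and augmenting the state with the last $\tau$ observations restores the Markov property, \emph{provided} $\tau$ is at least the horizon of the inner formula $\phi$. That horizon is finite because we restrict to the $(F,G)$-fragment, which also makes the recursion defining $\rho$ terminate and guarantees that the reward $R(s^\tau_{t+1})$ of Eq.~\ref{eq:reward} is a well-defined function of the $\tau$-state. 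Fix the discount $\gamma \in (0,1)$.

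Second, I would establish the two quantitative hypotheses. (i) \emph{Bounded rewards}: each atomic proposition $\pi \equiv f(x) \ge c$ has robustness $f(x_t) - c$, which takes finitely many values since $x_t$ ranges over a finite label set; the Boolean and bounded temporal operators combine these values only through $\min$, $\max$, and negation over the finite horizon, so $|\rho(s^\tau,\phi)| \le \rho_{\max}$ for a constant depending only on the label sets and $\phi$. Hence $R(s^\tau_{t+1}) \in (0, e^{\beta\rho_{\max}}]$ in the $\Diamond$ case and $R(s^\tau_{t+1}) \in [-e^{\beta\rho_{\max}}, 0)$ in the $\Box$ case, so in either case $|R| \le R_{\max} := e^{\beta\rho_{\max}}$. (ii) Consequently the Bellman optimality operator $(\mathcal{T}Q)(s^\tau,a) = \sum_{\bar s} P(\bar s \mid s^\tau,a)\big[R(\bar s) + \gamma\max_{a'}Q(\bar s,a')\big]$ is a $\gamma$-contraction in $\|\cdot\|_\infty$, so $Q^*$ exists, is unique, satisfies $\|Q^*\|_\infty \le R_{\max}/(1-\gamma)$, and the Q-iterates stay uniformly bounded.

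Third, I would cast the update as asynchronous stochastic approximation: with $w_k = R(s^\tau_{k+1}) + \gamma\max_{a'}Q_k(s^\tau_{k+1},a') - (\mathcal{T}Q_k)(s^\tau_k,a_k)$, the sequence $\{w_k\}$ is a martingale-difference noise whose conditional variance is bounded by (i)--(ii). Persistent $\epsilon$-greedy exploration on the finite $\tau$-MDP makes every pair $(s^\tau,a)$ visited infinitely often, and the Robbins--Monro conditions $\sum_k \alpha_k = \infty$, $\sum_k \alpha_k^2 < \infty$ are assumed. These are exactly the premises of the standard asynchronous stochastic-approximation lemma for sup-norm contractions (Watkins and Dayan; Tsitsiklis; Jaakkola, Jordan, and Singh), whose conclusion is $Q_k \to Q^*$ with probability $1$; the same route underlies related temporal-logic Q-learning results \cite{aksaray2016qlearning}.

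The main obstacle --- essentially the only step that is not a direct citation --- is item (i): uniform boundedness of the GTL robustness reward. This relies on the finite-horizon property of the $(F,G)$-fragment (so the $\min/\max/\neg$ recursion defining $\rho$ is finite) together with finiteness of the node and edge label sets (so the base predicates, and therefore every combination of them, lie in a compact interval). One must also confirm the companion measurability point --- that $\tau$ is chosen large enough that $R(s^\tau_{t+1})$ does not depend on history older than $\tau$ --- since otherwise the $\tau$-MDP would not be Markov and the contraction argument would not apply. With boundedness and this measurability in hand, contraction of $\mathcal{T}$ and the convergence conclusion follow verbatim from the classical theory.
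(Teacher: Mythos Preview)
Your proposal is correct and follows essentially the same route as the paper's (very terse) sketch: finite $\tau$-MDP, bounded rewards, $\gamma$-contraction of the Bellman operator, then invoke Tsitsiklis's asynchronous stochastic-approximation theorem. Your treatment is considerably more detailed --- in particular your reward bound $|R|\le e^{\beta\rho_{\max}}$ is more careful than the paper's stated $[-1,1]$, and you make explicit the Markov-measurability and exploration hypotheses the paper leaves implicit --- but the skeleton is identical.
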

\begin{proof}[Sketch]
$M^\tau$ is finite with bounded rewards in $[-1,1]$. The Bellman operator $T$ is a $\gamma$-contraction. By stochastic approximation theory \cite{Tsitsiklis1994}, $Q_k \to Q^*$.
\end{proof}

\begin{theorem}[Regret Bounds]
With GP-UCB selection, simple regret $r_K \le \sqrt{\frac{C\beta_K\gamma_K}{K}}$ where $\gamma_K=O((\log K)^{d+1})$.
\end{theorem}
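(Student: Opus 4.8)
The plan is to instantiate the standard Gaussian Process upper confidence bound (GP-UCB) analysis of Srinivas et al.\ for our objective $J(\phi_c)$, which is a function over the compact parameter domain $\Theta \subset \mathbb{R}^d$ of the cause-template parameters $\theta$. First I would verify the hypotheses of the GP-UCB regret theorem in our setting: (i) the domain $\Theta$ is compact (it is a product of bounded threshold intervals and bounded time-window bounds $[a,b]$); (ii) $J$ has bounded norm in the reproducing kernel Hilbert space associated with the RBF kernel $k(x,x')=\exp(-\|x-x'\|^2/2l^2)$, or alternatively is a sample from the corresponding GP prior; and (iii) the noisy evaluations $y_k = J(\theta_k)+\varepsilon_k$ have sub-Gaussian (e.g.\ bounded or Gaussian) noise, which holds because $S$, $N$, $E$ are bounded empirical averages of bounded robustness quantities, so the Monte-Carlo estimation error is sub-Gaussian. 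With these in place, the cumulative regret bound $R_K = O(\sqrt{K\,\beta_K\,\gamma_K})$ follows directly, where $\gamma_K$ is the maximal information gain after $K$ rounds and $\beta_K$ is the standard confidence-width schedule (logarithmic in $K$ and $|\Theta|$ for a suitable discretization, or $\beta_K = 2\log(\pi^2 K^2/6\delta) + 2d\log(dk_{\max}l_{\mathrm{diam}}K^2)$ in the continuous case).

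Next I would convert the cumulative bound to a simple-regret bound. Since the best of the first $K$ queries satisfies $r_K \le \frac{1}{K}R_K$, combining with $R_K \le \sqrt{C\,K\,\beta_K\,\gamma_K}$ for an absolute constant $C$ gives $r_K \le \sqrt{C\,\beta_K\,\gamma_K/K}$, which is exactly the claimed form. The remaining ingredient is the information-gain bound: for the squared-exponential (RBF) kernel on a $d$-dimensional domain, Srinivas et al.\ established $\gamma_K = O\big((\log K)^{d+1}\big)$, which I would simply cite and plug in. I would then note that, up to the $\beta_K = O(\log K)$ factor, the simple regret decays as $r_K = \tilde O\big(\sqrt{(\log K)^{d+2}/K}\big)\to 0$, so the Bayesian optimization of the cause template is no-regret and asymptotically finds the optimal parameters $\theta^\star$.

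The main obstacle is assumption (ii): justifying that $J(\phi_c)=-E+\lambda_S S+\lambda_N N$ actually lies in (or has controlled norm in) the RBF-RKHS. The robustness degree $\rho(\tau,\phi,t)$ involves $\min$/$\max$ over finite index sets and thresholded indicator-like terms (the membership of $\tau$ in $\mathcal D_+$ or $\mathcal D_-$ depends on the sign of $\rho(\tau,\phi_c,0)$), so $J$ as a function of $\theta$ is piecewise smooth but not obviously smooth across the boundaries where trajectories switch between $\mathcal D_+$ and $\mathcal D_-$. I would address this by working with smoothed surrogates — e.g.\ soft-min/soft-max with temperature parameter, and a sigmoid-weighted rather than hard partition of $\mathcal D_\pm$ — which renders $J$ infinitely differentiable with bounded derivatives on the compact domain $\Theta$; a classical embedding result then guarantees such a function has finite RKHS norm for the RBF kernel. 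Alternatively, and more simply, one can adopt the Bayesian viewpoint and posit $J \sim \mathcal{GP}(0,k)$ as a modeling assumption, in which case the regret bound holds with high probability over the prior and no smoothness argument is needed. I would state the theorem under whichever assumption is cleanest and relegate the smoothed-surrogate justification to a remark, since the core message — bounded, vanishing simple regret with the stated rate — is robust to this choice.
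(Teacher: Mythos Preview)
Your proposal is correct and follows essentially the same route as the paper's own proof: invoke the Srinivas et al.\ cumulative regret bound $R_K \le \sqrt{C\beta_K K\gamma_K}$ under GP-UCB, pass to simple regret via $r_K \le R_K/K$, and cite the squared-exponential information-gain estimate $\gamma_K = O((\log K)^{d+1})$. Your treatment is in fact more careful than the paper's sketch, since you explicitly check the compact-domain, sub-Gaussian noise, and RKHS-regularity hypotheses (and flag the piecewise nature of $J$ as the one delicate point), whereas the paper simply assumes these and cites the result.
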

\begin{proof}[Sketch]
Following \cite{Srinivas2012}, cumulative regret $R_K \leq \sqrt{C\beta_K K \gamma_K}$. Simple regret follows as $r_K \leq R_K/K$.
\end{proof}

\section{Implementation and Experiments}

\subsection{Baseline Methods}
We compare our GTL-CIRL approach against two baseline methods: (1) \textbf{Standard RL}: vanilla Q-learning without any causal or temporal logic components, relying solely on environment rewards; (2) \textbf{Counterfactual RL}: integrates counterfactual analysis by modeling hypothetical scenarios to evaluate rewards for unexplored actions.

\subsection{Gene Regulation Network}
We apply GTL-CIRL to a simulated gene regulation network with biological units (BUs) having four gene states ($G_1, G_2, G_3, G_4$). The goal is learning interventions that reduce disease progression to zero.
The agent must satisfy a hidden causal rule: identify BUs with specific mutation patterns, verify neighbor support, and apply timed gene modifications. The solution Causal GTL formula is:
\begin{equation}
\begin{aligned}
\Phi := &\text{do}\Big( \exists v \in V: \Big[ \Box_{[0,10]}(G_1(v)=1 \land G_2(v)=1 \\
& \land G_4(v)=1 \land G_3(v)=0) \land \exists^1(\bigcirc_{\text{conn}} (G_1(u)=1 \\
& \lor G_2(u)=1 \lor G_4(u)=1) ) \Big] \\
& \land \Diamond_{[0,3]}(\text{ModifyG}_{1}(v)=0) \land \Diamond_{[3,6]}(\text{ModifyG}_{2}(v)=0) \\
& \land \Diamond_{[6,9]}(\text{ModifyG}_{4}(v)=0)\Big) \\
& \rightsquigarrow \Diamond_{[12,15]}(\text{DiseaseProgression}=0)
\end{aligned}
\end{equation}

\subsection{Power Grid Management}
To evaluate our approach in another domain, we implemented a power grid management environment based on the IEEE 14-bus test system. The agent must maintain voltage stability throughout the network by controlling generator output and load shedding, while avoiding cascading failures and blackouts.
The environment's causal structure is formalized through the following Causal GTL formula:
\begin{equation}
\begin{aligned}
\Phi := \text{do}\Big(&\Box_{[0,0]}(V < 0.90) \wedge \neg \exists^1(\bigcirc_{P > 0})(V \ge 0.90) \Big)\\
&\rightsquigarrow \Diamond_{[1,5]}(G > 0)
\end{aligned}
\end{equation}

This formula captures when low-voltage buses without high-voltage neighbors require generation increases within a time window.
Figure~\ref{fig:experiments} shows GTL-CIRL outperforming baselines in both domains by exploiting causal structure to accelerate convergence.

\begin{figure}[h]
\centering
\begin{subfigure}{0.45\linewidth}
    \centering
    \includegraphics[width=\linewidth]{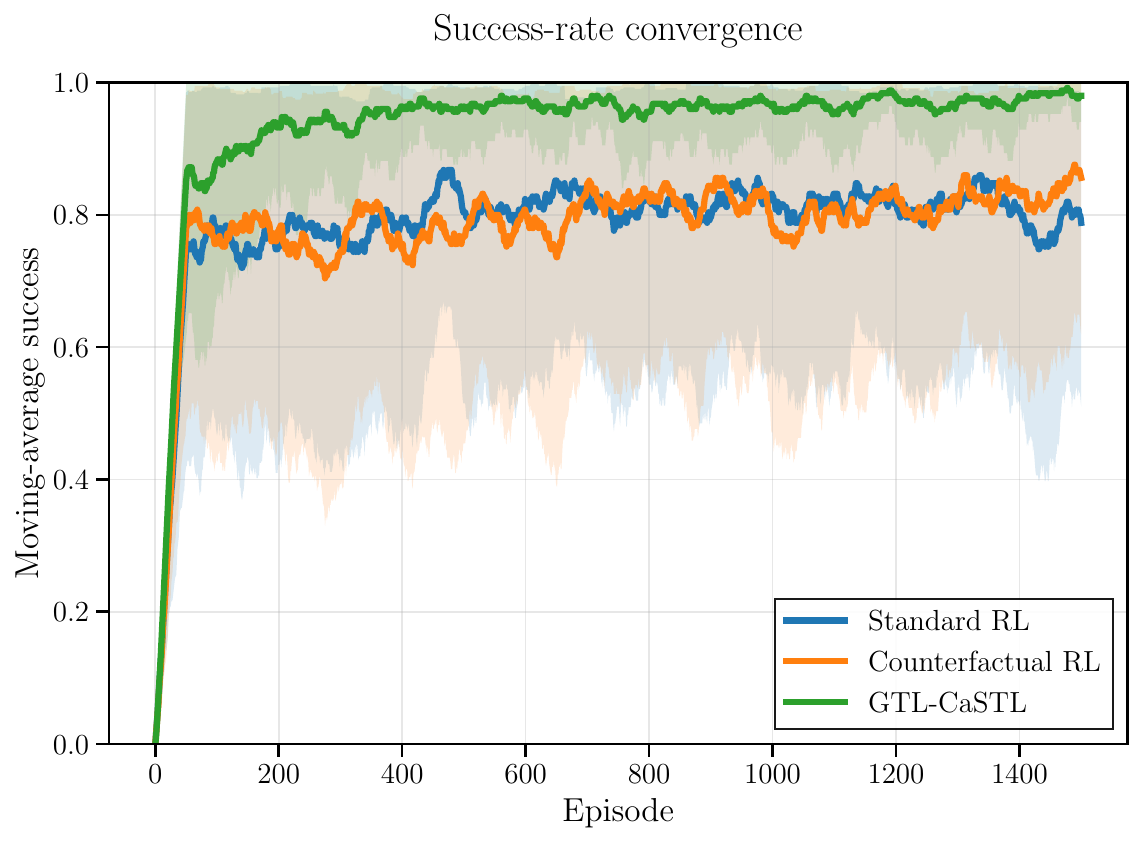}
    \caption{Gene regulation network}
    \label{fig:gene_environment}
\end{subfigure}
\hfill
\begin{subfigure}{0.52\linewidth}
    \centering
    \includegraphics[width=\linewidth]{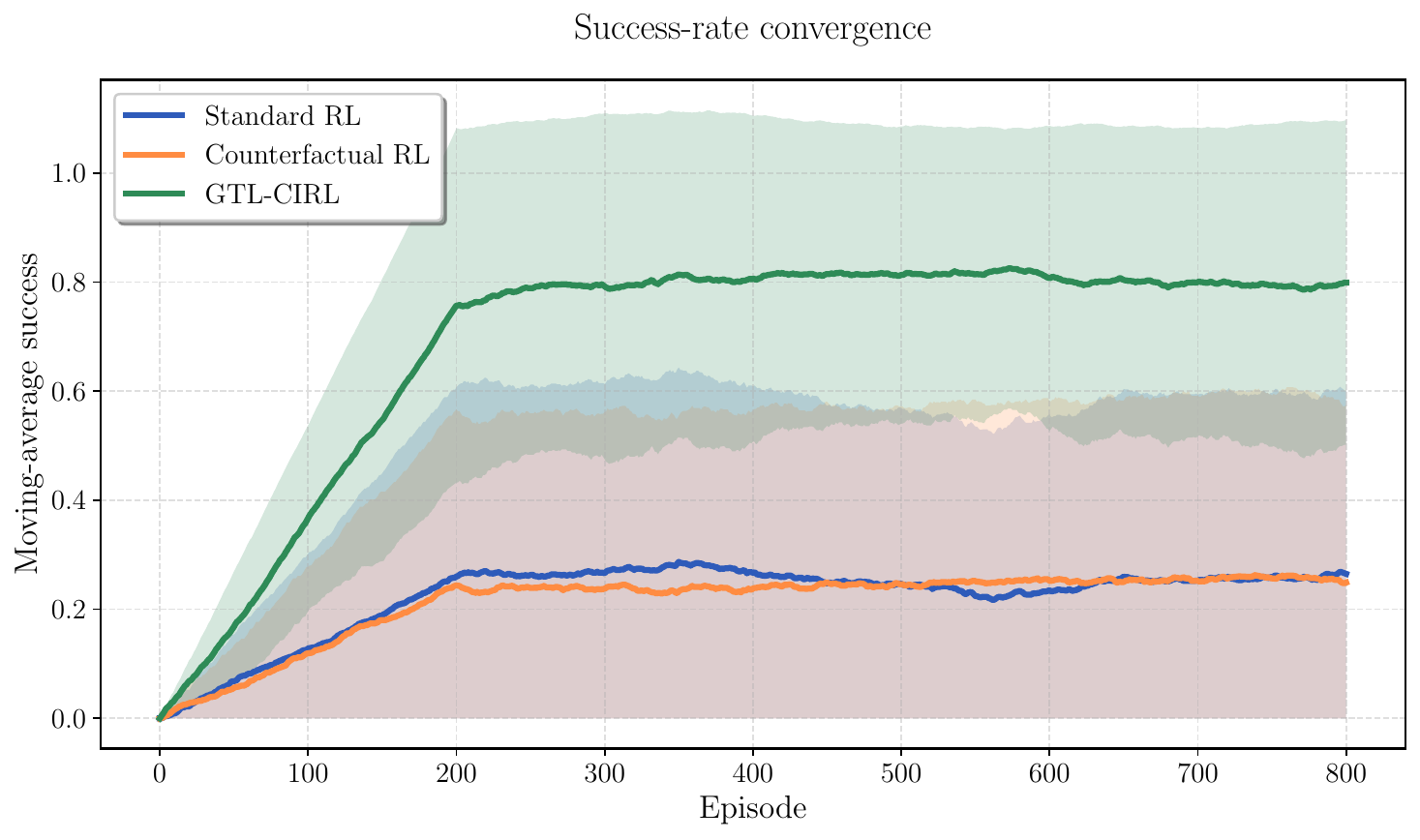}
    \caption{Power grid management}
    \label{fig:power_environment}
\end{subfigure}
\caption{Success rates of GTL-CIRL compared to baseline methods across two domains.}
\label{fig:experiments}
\end{figure}

\section{Conclusion}
We introduced GTL-CIRL, merging RL with Causal GTL mining. Using robustness rewards, counterexamples, and GP-driven refinement, it uncovers spatial-temporal structure to speed up learning. Experiments in gene and power networks show improved efficiency and interpretability over standard RL.

\section{Acknowledgments}
This work is partially supported by NSF CNS 2304863, CNS 2339774, IIS 2332476, and ONR N00014-23-1-2505.

\bibliography{bib}

@article{Tsitsiklis1994,
  title={Asynchronous stochastic approximation and Q-learning},
  author={Tsitsiklis, John N},
  journal={Machine learning},
  volume={16},
  number={3},
  pages={185--202},
  year={1994},
  publisher={Springer}
}

@incollection{bareinboim2020towards,
  author    = {E. Bareinboim and J. D. Correa and D. Ibeling and T. Icard},
  title     = {On Pearl's hierarchy and the foundations of causal inference},
  booktitle = {Probabilistic and Causal Inference: The Works of Judea Pearl},
  pages     = {507--556},
  year      = {2020}
}

@inproceedings{aksaray2016qlearning,
  author    = {D. Aksaray and A. Jones and Z. Kong and M. Schwager and C. Belta},
  title     = {{Q}-learning for robust satisfaction of signal temporal logic specifications},
  booktitle = {2016 IEEE 55th Conference on Decision and Control (CDC)},
  pages     = {6565--6570},
  year      = {2016}
}

@misc{xu2019graphtemporallogicinference,
      title={Graph Temporal Logic Inference for Classification and Identification}, 
      author={Zhe Xu and Alexander J Nettekoven and A. Agung Julius and Ufuk Topcu},
      year={2019},
      eprint={1903.09714},
      archivePrefix={arXiv},
      primaryClass={cs.LO},
      url={https://arxiv.org/abs/1903.09714}, 
}

@misc{cubuktepe2020policysynthesisfactoredmdps,
      title={Policy Synthesis for Factored MDPs with Graph Temporal Logic Specifications}, 
      author={Murat Cubuktepe and Zhe Xu and Ufuk Topcu},
      year={2020},
      eprint={2001.09066},
      archivePrefix={arXiv},
      primaryClass={cs.MA},
      url={https://arxiv.org/abs/2001.09066}, 
}

@ARTICLE{Srinivas2012,
  author={Srinivas, Niranjan and Krause, Andreas and Kakade, Sham M. and Seeger, Matthias W.},
  journal={IEEE Transactions on Information Theory}, 
  title={Information-Theoretic Regret Bounds for Gaussian Process Optimization in the Bandit Setting}, 
  year={2012},
  volume={58},
  number={5},
  pages={3250-3265},
  keywords={Kernel;Optimization;Gaussian processes;Noise;Convergence;Bayesian methods;Temperature sensors;Bandit problems;Bayesian prediction;experimental design;Gaussian process (GP);information gain;nonparametric statistics;online learning;regret bound;statistical learning},
  doi={10.1109/TIT.2011.2182033}}

@InProceedings{hadi,
  title = 	 {Mining Causal Signal Temporal Logic Formulas for Efficient Reinforcement Learning with Temporally Extended Tasks},
  author =       {Partovi Aria, Hadi and Xu, Zhe},
  booktitle = 	 {Proceedings of the International Conference on Neuro-symbolic Systems},
  pages = 	 {524--542},
  year = 	 {2025},
  editor = 	 {Pappas, George and Ravikumar, Pradeep and Seshia, Sanjit A.},
  volume = 	 {288},
  series = 	 {Proceedings of Machine Learning Research},
  month = 	 {28--30 May},
  publisher =    {PMLR},
  pdf = 	 {https://raw.githubusercontent.com/mlresearch/v288/main/assets/partovi-aria25a/partovi-aria25a.pdf},
  url = 	 {https://proceedings.mlr.press/v288/partovi-aria25a.html},
  abstract = 	 {Reinforcement Learning (RL) has emerged as a powerful paradigm for solving sequential decision-making problems. However, traditional RL methods often lack an understanding of the causal mechanisms that govern the dynamics of an environment. This limitation results in inefficiencies, challenges in generalization, and reduced interpretability. To address these challenges, we propose Signal Temporal Logic Causal Inference RL (STL-CIRL), a framework that mines interpretable causal specifications through Signal Temporal Logic and reinforcement learning, using counterexample-guided refinement to jointly optimize policies and causal formulas. We compare the performance of agents leveraging explicit causal knowledge with those relying solely on traditional RL approaches. Our results demonstrate the potential of causal reasoning to enhance the efficiency and robustness of RL for complex tasks.}
}

@InProceedings{pmlr-v236-corazza24a,
  title = 	 {Expediting Reinforcement Learning by Incorporating Knowledge About Temporal Causality in the Environment},
  author =       {Corazza, Jan and Aria, Hadi Partovi and Neider, Daniel and Xu, Zhe},
  booktitle = 	 {Proceedings of the Third Conference on Causal Learning and Reasoning},
  pages = 	 {643--664},
  year = 	 {2024},
  editor = 	 {Locatello, Francesco and Didelez, Vanessa},
  volume = 	 {236},
  series = 	 {Proceedings of Machine Learning Research},
  month = 	 {01--03 Apr},
  publisher =    {PMLR},
  pdf = 	 {https://proceedings.mlr.press/v236/corazza24a/corazza24a.pdf},
  url = 	 {https://proceedings.mlr.press/v236/corazza24a.html},
  abstract = 	 {Reinforcement learning (RL) algorithms struggle with learning optimal policies for tasks where reward feedback is sparse and depends on a complex sequence of events in the environment. Probabilistic reward machines (PRMs) are finite-state formalisms that can capture temporal dependencies in the reward signal, along with nondeterministic task outcomes. While special RL algorithms can exploit this finite-state structure to expedite learning, PRMs remain difficult to modify and design by hand. This hinders the already difficult tasks of utilizing high-level causal knowledge about the environment, and transferring the reward formalism into a new domain with a different causal structure. This paper proposes a novel method to incorporate causal information in the form of Temporal Logic-based Causal Diagrams into the reward formalism, thereby expediting policy learning and aiding the transfer of task specifications to new environments. Furthermore, we provide a theoretical result about convergence to optimal policy for our method, and demonstrate its strengths empirically.}
}

@INPROCEEDINGS{hadi_icccr,
  author={Partovi Aria, Hadi and Kim, Hyohun and Meshkat Alsadat, Shayan and Xu, Zhe},
  booktitle={2025 5th International Conference on Computer, Control and Robotics (ICCCR)}, 
  title={Learning Causal Dynamics and Reward Machines: A Framework for Faster Reinforcement Learning with Extended Temporal Tasks}, 
  year={2025},
  volume={},
  number={},
  pages={519-525},
  keywords={Digital control;Heuristic algorithms;Reinforcement learning;Robots;Multi-agent systems;Convergence;reinforcement learning;causal inference;causal dynamics;reward machines},
  doi={10.1109/ICCCR65461.2025.11072560}}

\end{document}